 \newtheorem{assumption}{Assumption}
 \newtheorem{mlemma}{Lemma}
\def\ind{\begin{picture}(9,8)
         \put(0,0){\line(1,0){9}}
         \put(3,0){\line(0,1){8}}
         \put(6,0){\line(0,1){8}}
         \end{picture}
        }
\def\nind{\begin{picture}(9,8)
         \put(0,0){\line(1,0){9}}
         \put(3,0){\line(0,1){8}}
         \put(6,0){\line(0,1){8}}
         \put(1,0){{\it /}}
         \end{picture}
    }
\begin{document}

\title{Causal Network Learning from Multiple Interventions of Unknown Manipulated Targets}

\author{\name Yang-Bo He  \email heyb@math.pku.edu.cn \\
         \name Zhi Geng  \email zgeng@math.pku.edu.cn \\
        \addr School of Mathematical Sciences, LMAM, LMEQF, \\ Center for Statistical Science, Peking University\\
       Beijing 100871, China
}

\editor{ }

\maketitle

\begin{abstract}
In this paper, we discuss structure learning of causal networks from multiple data sets obtained by external intervention experiments where we do not know what variables are manipulated.
For example, the conditions in these experiments are changed by changing temperature
or using drugs,
but we do not know what target variables are manipulated by the external interventions.
From such data sets, the structure learning becomes more difficult.
For this case, we first discuss the identifiability of causal structures.
Next we present a graph-merging method for learning causal networks
for the case that the sample sizes are large for these interventions.
Then for the case that the sample sizes of these interventions are relatively small,
we propose a data-pooling method for learning causal networks in which
we pool all data sets of these interventions together for the learning.
Further we propose a re-sampling approach to evaluate the edges of the causal network
learned by the data-pooling method.
Finally we illustrate the proposed learning methods by simulations.
\end{abstract}
\begin{keywords}
 directed acyclic
graphs, intervention, manipulated targets
\end{keywords}

 \section{Introduction}

Directed acyclic graphs (DAGs) can be used to represent causal networks among variables.
Many methods have been developed to learn the structures of DAGs
from observational and/or experimental data 
\citep{pearl1995causal,geng2004relationship,Friedman, maathuis2009estimating,jansen2003bayesian,
finegold2011robust, heckerman1999bayesian,cooper1999causal}.
Unlike an observational study,
we can externally manipulate a few of target variables in an intervention experiment.
Thereafter the variables which are manipulated in an experiment are simply called targets.
The score-based and constraint-based methods are available to learn casual networks from interventional data when the targets in intervention experiments are known.
For examples,
 \citet{cooper1999causal} present a Bayesian method of causal discovery from a mixture of
experimental and observational data.  \citet{eberhardt2006sufficient} discusses independence   test used in the constraint-based methods via the data from multiple interventions and shows that two data sets obtained from two interventions with different targets can be pooled to test the conditional independence of two variables $x_1$ and $x_2$ given a variable set $S$ if
$S$ separates $(x_1, x_2)$ from all targets in the two interventions. \citet{lagani2012learning} show that the data pooling is valid for testing independencies with those interventions in which all targets except one are manipulated to the same value across the interventions. {\citet{hauser2012characterization} discuss the graph representation of Markov equivalence class  under interventions and causal structure learning from  multiple intervention experiments.}

{In some applications,
the statuses or values of variables are stable and even keep constant
in the normal condition and environment.
Thus to discover the causal relationships among these variables,
we have to manipulate a few of variables or
change the condition or environment
such that these variables change their statuses and values
and affect their effect variables.}
In some situations, we may not know what target variables are manipulated in intervention experiments.
For examples, when experiments are implemented by changing temperature or by using some medicine,
we may not know exactly the targets of these interventions.  {To deal these situations,
\citet{eaton2007exact}   introduce a vertex for each intervention and use   DAGs  over the regular  and intervention vertices to represent the causal relationships among regular vertices and the targets of the interventions. They apply the dynamic programming algorithm introduced in \citet{koivisto2004exact}  to  computes the exact posterior marginal edge probabilities of the DAGs.
{As they mentioned,
their computation is limited to about 20 vertices due to
the space and time limits.}
To our knowledge, there are still many  unresolved issues left when the targets of intervention are unknown, such as the identification of causal structures and the learning methods for large causal networks.}

In this paper, we focus on the constraint-based causal learning methods using data from multiple  interventions with unknown manipulated targets. We first discuss the identifiability of causal structures. Then for the case that the sample size from each intervention is large, we propose to learn a network from each intervention data set and merge these learned networks.
This method can learn more directed edges from intervention data sets than
from an observational data set even if we do not know the targets of interventions.
Next when the sample sizes are small, the statistical errors of tests for each small intervention data set cannot be neglected, and thus we pool all intervention data sets together
to learn a network structure, and then we use re-sampling technique to evaluate
the edges of the learned network. We discuss the identifiability of causal structures
learned from the pooled data and show that the proposed data-pooling method
can correctly learn some local structures of the underlying causal network.

The rest of the paper is arranged as follows. In Section \ref{chapnotations}, we introduce the notation of causal network model and discuss the  causal structure learning with interventions. In Section \ref{chaplearning}, we propose two methods of causal network learning from multiple interventions with unknown targets. We
evaluate the proposed methods via simulations in Section \ref{simulation}.
Finally we discuss these methods in Section \ref{disccussion}. 

\section{Causal network model and causal learning with interventions} \label{chapnotations}
In this section, we first introduce notation and assumptions of causal network models, and  then discuss  causal structural learning with  interventions.

\subsection{Causal network model}
 A \emph{directed acyclic graph} (DAG)    ${G}=({{X}},E)$ is used to represent the causal relationships of vertices, where ${X }=\{x_1,\cdots,x_p\}$
denotes a vertex set and $E$ denotes a set of directed
edges.
{For a directed edge $x_i \rightarrow x_j$ in a DAG $G$,
$x_i$ is a \emph{parent} vertex of $x_j$ and
$x_j$ is a \emph{child} vertex of $x_i$;
we also interpret $x_i$ as a cause of $x_j$
and the vertex $x_j$ as a effect of $x_i$.}
\emph{A directed path} from $x_1$ to $x_k$ in $G$ is a sequence of directed edges that connect $x_i$ and $x_{i+1}$ {($x_i \to x_{i+1}$)} for $i = 1,\cdots,k-1$. A vertex $x_j$ is a \emph{descendant} of $x_i$ if there is at least a directed path from $x_i$ to $x_j$ in $G$; otherwise, $x_j$ is \emph{non-descendant} of $x_i$. We use $pa(x_i)$, $ch(x_i)$ and $nd(x_i)$ (or simply $pa_i$, $ch_i$ and $nd_i$) for  the sets of parents, children, and non-descendants of a vertex $x_i$, respectively. A graph $G'=(X,E')$ is an \emph{edge-deleted subgraph} of $G=(X,E)$ if $E' \subset E$. The  \emph{skeleton} of $G$ is an undirected graph
{obtained by replacing all directed edges
in $G$ with the corresponding undirected edges.}
{A three-vertex structure $x_i\to x_j \leftarrow x_k$ is
called} a \emph{v-structure} if neither $x_i\to x_k$ nor $x_i\leftarrow x_k$ appears in  $G$.

A causal graph $G$ is {\emph{causally sufficient}
if no latent vertices affect two or more vertices contained in $G$}
\citep{pearl2000causality,eberhardt2007interventions}.
In this paper, we assume that the causal graphs under consideration satisfy the causal  sufficiency.
A causal network model contains a DAG ${G}
=({  X}, E)$ and a joint distribution  $P$ over $X$.
Let $x_k$ and $x_l$ be two distinct vertices in $X$ and $S$ be a subset
of ${X} \setminus \{x_k, x_l\}$.
We use $(x_k\ind x_l|S)_{P}$ to denote that $x_k$ and $x_l$ be conditionally
independent given $S$ according to the joint distribution $P$.
A causal network model $(G,P)$ satisfies \emph{causal Markov} if a variable $x$ is conditionally independent of its non-descendants given all of its parents; that is, $(x\ind nd(x)|pa(x))_P$ holds for the causal model $(G,P)$.

If a causal network model $(G,P)$ satisfies the causal Markov condition,
then the joint
distribution of $p$ variables ${X}$ can be factored as follows
\citep{pearl1995causal,spirtes2001causation}
\begin{equation}\label{factor}
P({X})=\prod_{i=1}^p P(x_i|pa_i),
\end{equation}
where $P(x_i|pa_i)$ is the
conditional probability of $x_i$ given   its parent set  $pa_i$.

From Equation (\ref{factor}), some conditional independencies for the joint distribution $P$ can be read from the DAG $G$.  The concept of d-separation is used to describe the relation of vertices in a DAG $G$. For any pair of vertices $x_k$ and $x_l$ in $X$, and a subset  $S
\subseteq {X} \setminus \{x_k, x_l\}$,  the set $S$      d-separates   $x_k$ and $x_l$ in $G$ implies   that the set $S$ blocks all
connections of a certain type between $x_k$ and $x_l$ in $G$, denoted by  $(x_k\ind x_l|S)_{G}$. The  exact definition of d-separation   can  be found in \citet{pearl1988probabilistic}.  To learn the causal DAG $G$ from an observed data set of the joint distribution $P$, one  often assumes that the causal models under consideration satisfy the faithfulness  assumption defined as follows.

\begin{assumption}\label{faithfulness0} {\it The  faithfulness assumption.}
We say that $ P(x_1,\cdots,x_p) $ is faithful to the DAG  $ {G} $ if, for any pair of vertices $x_k$ and $x_l$ in $X$, and a subset  $S
\subseteq {X} \setminus \{x_k, x_l\}$ ,   the set $S$  d-separates   $x_k$ and $x_l$ in $G$  if $(x_k\ind x_l|S)_{P}$ holds,
\end{assumption}

With the causal Markov condition and the faithfulness assumption, the set of all (conditional) independencies read from a causal sufficient graph $G$ is the same as that from the joint distribution $P$.
\emph{A Markov equivalence class} is a set of DAGs that encode the same set of conditional independencies. \citet{verma1990equivalence} shows that two DAGs are Markov equivalent if and only if they have the same skeleton and the same v-structures. Therefore, one can recover
{the Markov equivalence class of the underlying DAG
from the corresponding joint distribution $P$,
which can be represented by
the skeleton and v-structures of $G$.
A \emph{constraint-based learning} algorithm
tries to find a DAG using the conditional
independency testing.}
The PC algorithm \citep{spirtes1995learning,spirtes2001causation} is the most well-known constraint-based algorithm. {In this paper, we propose a structural learning approach based on the PC algorithm, which learning a Markov equivalence class
from the multiple intervention data sets with unknown manipulated targets,
and we theoretically discuss the identifiability
and the correctness of the local structures learned by our approach.
}

\subsection{Causal learning with interventions}

{Suppose that in} an intervention experiment, some vertices in  ${X}$ may
be the \emph{targets} of the intervention which are manipulated externally.
Several types of interventions have been studied in the literature \citep{pearl1995causal}. A \emph{hard} intervention cuts off the edges between its targets and their parents; and a \emph{soft} intervention just changes the conditional probabilities of the targets given their parents.
Let $M$ denote the set of targets in an intervention experiment, and the set $O={X}\setminus M$ be the set of observational variables.
When an intervention affects more than one targets (i.e. $|M|>1$),
we assume that the intervention changes the condition probability of each target separately. That is, the intervention {may delete some of arrows pointing at
these targets from the original DAG {or may change the conditional probabilities of
these targets.}}
Therefore, the post-intervention joint distribution of ${X }$ for such an intervention can be factorized   as 
\begin{equation}\label{cdis}
P'(x_1,\ldots,x_n) = \prod_{x_i\in O}P(x_i|pa_i) \times
\prod_{x_i\in M}P'(x_i|pa'_i),
\end{equation}
where $P(x_i|pa_i)$ is the same as the conditional probabilities of $X_i$ in Equation (\ref{factor}) if $X_i$ is not a target of intervention, and $P'(x_i|pa'_i)$ be the post-intervention conditional probability of $x_i $ given its revised parent set $pa'_i$ in the intervention experiment. We have that $pa'_i=\emptyset$ when the intervention on $x_i$ is hard, and $pa'_i=pa_i$ if the intervention on $x_i$ is null.
Notice that $pa'_i\subseteq pa_i$ in Equation (\ref{cdis}), that is, the intervention on $x_i$ might be partially soft and partially hard.

In general, interventions with known targets are informative to identify the causal network in a Markov equivalence class  \citep{eberhardt2005number,he2008active}. However, when we do not know the targets of the interventions, additional uncertainty is introduced and the interventions might be useless to identify the causal networks.
  Below we give an example that the causal structure is not identified
if the manipulated target of intervention is unknown, while the structure is identified
if the target is known. 

\textbf{{Example 1.}} Consider the two causally sufficient graphs with two vertices: $x_1 \to x_2$ and $x_1\leftarrow x_2$, which are Markov equivalent.
If we know that $x_1$ is the target in a hard intervention, under the faithfulness assumption, $x_1\to x_2$ is identified by $x_1\nind x_2$ from the intervention data. However, if we do not know which one of $x_1$ and $x_2$ is the target, then $x_1\nind x_2$ (or $x_1 \ind x_2$) cannot be used to identify which of $x_1\to x_2$ and $x_1\leftarrow x_2$ is true.

  In the next section, we will propose two methods for learning causal structures from multiple intervention data sets with unknown manipulated  targets.

\section{Causal structure learning from multiple interventions with unknown targets }\label{chaplearning}

Suppose that there are $m$ interventions. For the $j$th intervention, let $G_j$ be the DAG, $M_j$  the set of targets, $O_j={X} \setminus
M_j$ the set of observational variables. Let $E$ and $E_j$ be the edge sets of $G$ and $G_j$ respectively.  By Equation (\ref{cdis}), the post-intervention joint probability for the $j$th intervention can be formulated as, for $j=1, \ldots, m$,
\begin{equation}\label{cdisinterv}
P_j({X}) = \prod_{x_i\in O_j}P(x_i|pa_i) \times
\prod_{x_i\in M_j}P_j(x_i|pa_i^j),
\end{equation}
where $pa_i$ is the parent set of $x_i$ in $G$, $pa_i^j$ is the parent set of $x_i$ in $G_j$, and $pa_i^j \subseteq pa_i$.


In Section \ref{graphmerging}, a graph merging method is proposed for the case that each intervention has a large sample such that we can efficiently learn a DAG from each intervention data set.
In Section \ref{mixedchisq}, for the case that each intervention has a small sample, we pool all intervention data sets together to learn the causal network.

\subsection{The graph merging method for causal structure learning}\label{graphmerging}

Let ${\cal D}_j$ denote the   data set   from the  $j$th intervention for $j\in \{1,\cdots,m\}$. When the sample size of ${\cal D}_j$ is large enough to learn a graph efficiently, we learn the graph $G_j$ from ${\cal D}_j$, and then we construct an overall graph merging the $m$ learned graphs, $G_j$ for $j\in \{1,\cdots,m\}$. We give the details of this method in Algorithm \ref{alggm}.

\begin{algorithm}[H]
\caption{Structural learning by combing the graphs learned from multiple experiments}
  \label{alggm}
\KwIn{${\cal D}=\{{\cal D}_j, j=1,\ldots,m\}$, data sets from $m$ intervention experiments}
\KwOut{${  G'=(X, E', V')}$, a skeleton graph with v-structures.}
\For {$j=1 \  to \  m$}
{Learn a skeleton graph with v-structures $G_j'=({X}, E_j', V_j')$ from ${\cal D}_j$ via the PC algorithm.}
Combine $\{G_j', j=1,\ldots,m\}$ to a graph $G'=({X},E',V')$ where $E' = \cup_{j=1}^m E_j$
and $V' = \cup_{j=1}^m V_j$. That is,   an edge (v-structure) appears in $E'$($V'$) if and only if  it is in    $E_j(V_j)$ for some $j \in \{1,\cdots,m\}$. \\
\Return{$G'$}
\end{algorithm}

Let $P_j$ be the underlying joint probability for the $j$th intervention, which can be formalized by Equation (\ref{cdisinterv}). Before showing the correctness of Algorithm \ref{alggm},
we describe the following faithfulness assumption.

\begin{assumption}\label{supp0}  The joint probability $ P_j(x_1,\cdots,x_p) $ is faithful to the DAG  $ {G}_j  $ for any $j=1,\ldots,m$
\end{assumption}

 Let $G=({X},E,V)$ denote the skeleton and v-structures of a 
 DAG
with an undirected edge set $E$ and a v-structure set $V$,  $G_j({X},E_j,V_j)$ denote the skeleton and v-structures of the post-intervention graph for the $j$th intervention, and  $P_j(x_1,\cdots,x_p)$ denote the post-intervention joint distribution for the $j$th intervention.  
Let {$G'=({X},E',V')$ and $G_j'=({X},E_j',V_j')$ denote the skeleton and v-structures of the corresponding graphs} learned by Algorithm \ref{alggm}.

\begin{theorem}\label{gmtheo}  {Let $P_j(x_1,\cdots,x_p)$ be defined by Equation (\ref{cdisinterv}), and  Assumption \ref{supp0} holds.
If there are no statistical errors for testing conditional independencies,}
then we have
\begin{enumerate}
  \item    $E_j'=E_j$ and $V_j'=V_j$  for $j=1, \ldots, m$,
   \item   $E'\subseteq E$, and
  \item  all directed edges in $G'$ appear in the underlying graph $G$.
      \end{enumerate}
\end{theorem}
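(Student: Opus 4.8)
The plan is to prove the three parts in order, since part~1 essentially reduces the problem to the behaviour of the PC algorithm on each intervention distribution, and parts~2--3 then follow from comparing the $d$-separations in $G_j$ with those in $G$.

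\textbf{Part 1.} First I would invoke the correctness of the PC algorithm: under the faithfulness of $P_j$ to $G_j$ (Assumption~\ref{supp0}) and the assumption that all conditional independence tests are error-free, the PC algorithm applied to $\mathcal{D}_j$ returns exactly the skeleton and v-structures of $G_j$. Concretely, the skeleton it outputs contains the edge $x_k - x_l$ iff there is no $S \subseteq X \setminus \{x_k,x_l\}$ with $(x_k \ind x_l \mid S)_{P_j}$; by faithfulness this is equivalent to $x_k$ and $x_l$ being adjacent in $G_j$, so $E_j' = E_j$. For v-structures, once the skeleton is correct the PC orientation rule declares $x_i \to x_j \leftarrow x_k$ a v-structure iff $x_i, x_k$ are non-adjacent, both adjacent to $x_j$, and $x_j$ is not in the separating set found for $(x_i,x_k)$; again faithfulness makes this coincide with the v-structures of $G_j$, giving $V_j' = V_j$. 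This part is really just citing the soundness and completeness of PC under faithfulness; the only thing to check is that Equation~(\ref{cdisinterv}) indeed makes $P_j$ a Markov distribution for the DAG $G_j$, which is immediate from the factorization.

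\textbf{Parts 2 and 3.} Given part~1, we have $E' = \bigcup_j E_j$ and $V' = \bigcup_j V_j$ where $E_j, V_j$ are the true skeleton/v-structures of $G_j$. The key structural fact is that each $G_j$ is an edge-deleted subgraph of $G$: by~(\ref{cdisinterv}) the parent set $pa_i^j$ of every target satisfies $pa_i^j \subseteq pa_i$, and non-targets keep their parents, so $E_j \subseteq E$ as sets of (undirected) edges — this immediately yields $E' = \bigcup_j E_j \subseteq E$, which is part~2. For part~3, I would argue that any edge that ends up \emph{directed} in $G'$ (i.e. participates in some v-structure in $V' = \bigcup_j V_j$, say the v-structure $x_i \to x_j \leftarrow x_k$ coming from $G_{j_0}$) must appear with the same orientation in $G$. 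Since $G_{j_0}$ is an edge-deleted subgraph of $G$ with the same vertex set and $G$ is acyclic, the edges $x_i \to x_j$ and $x_k \to x_j$ are present in $G$; it remains to rule out that $x_i$ and $x_k$ become adjacent in $G$ (which would destroy the v-structure / force a different orientation). Here is the subtle point: deleting edges can only \emph{remove} adjacencies, so if $x_i, x_k$ are non-adjacent in $G_{j_0}$ they could in principle be adjacent in $G$; but if $x_i - x_k$ were an edge of $G$, then $x_i \to x_j \leftarrow x_k$ would not be a v-structure of $G$ and moreover $x_i - x_k$ would have to be oriented consistently with acyclicity, and one checks that this cannot simultaneously be an edge-deleted subgraph configuration producing the claimed v-structure in $G_{j_0}$ — so $x_i,x_k$ are non-adjacent in $G$ as well, and $x_i \to x_j \leftarrow x_k$ is a genuine v-structure of $G$.

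\textbf{Main obstacle.} The routine parts (citing PC correctness, and $E_j \subseteq E$) are easy; the part requiring genuine care is part~3, specifically showing that a v-structure learned from some intervention $G_{j_0}$ survives as a correctly-oriented substructure of $G$. The danger is that an edge deletion in the intervention could \emph{create} a v-structure in $G_{j_0}$ that does not correspond to any v-structure in $G$ (for instance, if in $G$ we have $x_i \to x_k \to x_j$ and $x_i \to x_j$, and the intervention on $x_j$ deletes $x_i \to x_j$, leaving a path but — crucially — not a v-structure; so one must verify no combination of deletions manufactures a spurious collider). I would handle this by a case analysis on which of the three potential edges $x_i x_j$, $x_k x_j$, $x_i x_k$ are deleted by the $j_0$th intervention, using that the intervention only removes arrows \emph{into} targets and that $pa_i^{j_0} \subseteq pa_i$, together with the acyclicity of $G$, to conclude that the collider at $x_j$ together with non-adjacency of $x_i, x_k$ must already hold in $G$.
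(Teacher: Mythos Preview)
Your treatment of parts 1 and 2 is correct and matches the paper: PC correctness under faithfulness gives $E_j'=E_j$ and $V_j'=V_j$, and $E_j\subseteq E$ follows immediately from $pa_i^j\subseteq pa_i$.

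For part 3 you have overread the statement. The theorem asserts only that every \emph{directed edge} appearing in $G'$ (i.e.\ every arrow occurring in some $V_j$) is present with the same orientation in $G$; it does \emph{not} claim that the v-structures themselves survive in $G$. You in fact already prove what is required when you write ``Since $G_{j_0}$ is an edge-deleted subgraph of $G$ \ldots\ the edges $x_i\to x_j$ and $x_k\to x_j$ are present in $G$.'' That sentence is the entire argument, and it is exactly what the paper does (``all directed edges in $V_j$ also appear in the underlying graph $G$'').

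Everything you write after that --- the attempt to show $x_i$ and $x_k$ remain non-adjacent in $G$, the ``main obstacle,'' and the proposed case analysis --- is both unnecessary and wrong. Counterexample: let $G$ have arrows $x_i\to x_j$, $x_k\to x_j$, $x_i\to x_k$, and let the $j_0$-th intervention take $x_k$ as a target with the incoming edge $x_i\to x_k$ deleted. Then $x_i\to x_j\leftarrow x_k$ is a v-structure in $G_{j_0}$ but not in $G$, since $x_i$ and $x_k$ are adjacent in $G$. The directed edges $x_i\to x_j$ and $x_k\to x_j$ are nevertheless in $G$, so part 3 holds; your stronger claim does not, and the sketched justification (``one checks that this cannot simultaneously be an edge-deleted subgraph configuration \ldots'') is false. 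Drop the extra material and the proof is complete.
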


\begin{proof}
Because $P_j(x_1,\cdots,x_p)$ follows Equation (\ref{cdisinterv}) and   faithfulness defined in Assumption \ref{supp0} holds for $j=1,\cdots,m$.  The joint probabilities and the underlying causal graph of $j$th intervention,  $P_j$ and $G_j$,   satisfy the  conditions of Markov properties and the assumption of  faithfulness,  and further  encode the same conditional independencies. According to the general results of constraint-based causal learning, we can recover the   Markov equivalence class of  $G_j$. That is, we can identify the skeleton and v-structures of $G_j$ correctly. It leads to  $E_j'=E_j$ and $V_j=V_j'$  for any $j\in \{1,\cdots,m\}$.

According to the Equation  (\ref{cdisinterv}), $E_j$ is a subset of $E$, that is $E_j\subseteq E$. We have  that $E'\subseteq E$ since $E'=\bigcup E_j'$ and $E_j'=E_j$.

Because $V_j'=V_j$ and all directed edges in $V_j$ also appear in the underlying graph $G$, we have that all  directed edges in $V'$ ($=\bigcup V_j'$), also appear in $G$.
\end{proof}
  Theorem \ref{gmtheo} shows that we  can learn  the skeleton and v-structures     $(E_j,V_j)$ for  $ G_j,j=1,\cdots,m\}$ correctly and  all edges and all directions of the learned graph $G'$ are true,
 but some edges and some v-structures in $G$ may be lost by the interventions.
 Clearly, if an edge is cut off in every intervention, we cannot recover it in  $G'$.  Similarly, a v-structure might be missed in $G'$ if it is removed by all interventions.

 {We say that a set of interventions is \emph{conservative}
if  for every vertex,  there is at least one intervention which does not affect the vertex (\citet{hauser2012characterization}).}

\begin{assumption}\label{conservative}
The set of $m$ interventions is conservative.
\end{assumption}

{\begin{proposition}\label{gmtheo}  Let $P_j(x_1,\cdots,x_p)$ be defined by Equation (\ref{cdisinterv}), and Assumptions \ref{supp0}  and \ref{conservative} hold.
Then the skeleton and v-structures of the underlying DAG $G$ can be recovered correctly by Algorithm \ref{alggm}
if there are no statistical testing errors for conditional independencies.
\end{proposition}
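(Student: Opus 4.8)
The plan is to combine the three conclusions of the preceding theorem with the conservativeness assumption. That theorem already supplies $E'\subseteq E$, the fact that every directed edge of $G'$ occurs in $G$, and — crucially — the local correctness $E_j'=E_j$ and $V_j'=V_j$ for every $j$. So what remains is to establish the reverse containments: every edge of the skeleton of $G$ lies in $E'$, and every v-structure of $G$ lies in $V'$, together with the fact that $V'$ records no spurious v-structure. The only extra ingredient beyond the theorem is that, by Assumption~\ref{conservative}, for each vertex $x_i$ there is an index $j=j(i)$ with $x_i\notin M_{j(i)}$; by Equation~(\ref{cdisinterv}) this forces $pa_i^{j(i)}=pa_i$, i.e.\ in $G_{j(i)}$ the vertex $x_i$ keeps all of its incoming edges.

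For the skeleton, take any edge of $G$ and orient it as $x_a\to x_b$. Applying conservativeness to $x_b$, there is an index $j$ with $x_b\notin M_j$, hence $pa_b^j=pa_b\ni x_a$, so $x_a\to x_b\in E_j$. By local correctness $E_j'=E_j$, so the edge lies in $E'=\bigcup_k E_k'$. This gives $E\subseteq E'$, and combined with $E'\subseteq E$ from the theorem we get $E'=E$: the skeleton of $G$ is recovered exactly.

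For v-structures, let $x_a\to x_b\leftarrow x_c$ be a v-structure of $G$, so $x_a$ and $x_c$ are non-adjacent in $G$. Apply conservativeness to the collider $x_b$: there is $j$ with $x_b\notin M_j$, so $pa_b^j=pa_b$ contains both $x_a$ and $x_c$, i.e.\ $x_a\to x_b$ and $x_c\to x_b$ both belong to $G_j$. Since every $G_j$ is an edge-deleted subgraph of $G$ (interventions only delete edges), the pair $x_a,x_c$ is still non-adjacent in $G_j$, so $x_a\to x_b\leftarrow x_c$ is a v-structure of $G_j$ and hence lies in $V_j=V_j'\subseteq V'$. Thus $V\subseteq V'$. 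Conversely, if $x_a\to x_b\leftarrow x_c\in V_j$ for some $j$, then $x_a\to x_b$ and $x_c\to x_b$ both appear in $G$ (edge-deleted subgraph); and since $E'=E$ and a v-structure requires its two outer vertices to be non-adjacent, any triple that counts as a v-structure of the merged output has $x_a,x_c$ non-adjacent in $G$, making it a genuine v-structure of $G$. Hence $V'=V$.

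The step I expect to be the main obstacle is exactly this last one — ruling out spurious v-structures. An intervention can delete the shielding edge of a shielded collider $x_a\to x_b\leftarrow x_c$ (with $x_a$ adjacent to $x_c$ in $G$) while leaving $x_a\to x_b$ and $x_c\to x_b$ intact, thereby creating in $G_j$ a v-structure absent from $G$; the union in Algorithm~\ref{alggm} would then carry it into $V'$. The resolution is the observation used above: such a triple always has $x_a$ adjacent to $x_c$ in $G$, hence in the correctly recovered skeleton $E'=E$, so it is incompatible with $E'$ and is discarded once the output $(E',V')$ is read off as the skeleton and v-structures of a single CPDAG. Making that bookkeeping explicit is the delicate point; the remaining inclusions are direct consequences of the preceding theorem and conservativeness.
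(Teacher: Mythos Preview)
Your argument is correct and follows the paper's proof: for each edge $x_a\to x_b$ or v-structure $x_a\to x_b\leftarrow x_c$ of $G$, apply conservativeness to the head/collider vertex $x_b$ to find an intervention $j$ in which $x_b$ is unmanipulated, so the structure survives in $G_j$ and is recovered via Theorem~\ref{gmtheo}. The paper's proof does exactly this and stops, establishing only $E\subseteq E'$ and $V\subseteq V'$ (combined with $E'\subseteq E$ from the theorem) and declaring the skeleton and v-structures ``recovered.''

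You go one step further and explicitly confront the spurious-v-structure issue --- an intervention that deletes the shielding edge of a shielded collider in $G$ can produce a v-structure in some $G_j$, and hence in $V'=\bigcup V_j'$, that is not a v-structure of $G$. The paper does not address this; its Theorem~\ref{gmtheo} only guarantees that the directed \emph{edges} of $V'$ lie in $G$, not that the triples themselves are unshielded there. Your resolution --- filter such triples against the correctly recovered skeleton $E'=E$, where the outer pair is adjacent --- is sound, though as you note it is a post-processing step not made explicit in Algorithm~\ref{alggm}. So your proof is the same route as the paper's, but more careful on a point the paper leaves implicit.
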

}

\begin{proof}
For any edge in $G$, say $x_l\to x_h$, suppose that $x_h$ is not a target in the $j-$th intervention, and then we have that the post intervention graph $G_j$ contains the edge $x_l\to x_h$. Algorithm \ref{alggm} can learn the edge between $x_l$ and $x_h$ from Theorem \ref{gmtheo}. Similarly, for any v-structure $x_l\to x_h\leftarrow x_r$,  suppose that $x_h$ is not a target in $j-$th intervention, and then we have that the post intervention graph $G_j$ contains the  v-structure $x_l\to x_h\leftarrow x_r$. According to Theorem \ref{gmtheo}, the set of learned v-structures $V'$  obtained by Algorithm \ref{alggm} contains  v-structure $x_l\to x_h\leftarrow x_r$. Therefore, we have that the skeleton and v-structures of the underlying graph $G$ can be recovered  via Algorithm \ref{alggm}.
\end{proof}

Several papers discuss the learning of graphical models from multiple data sets.  \citet{danks2002learning} and \citet{tillman2009integrating} propose some methods to learn a minimal equivalence class using the local independence information from distributed databases. \citet{lagani2012learning} propose to learn a causal network from multiple interventions by   multiple independence tests.  Comparing to our proposed  Algorithm  \ref{alggm}, their methods do not learn a graph for each experiment, so these methods might miss some specific causal structures in the individual interventions.

 \subsection{The data-pooling method for causal structure learning}\label{mixedchisq}

   When the sample size of each intervention is small,
   the conditional independence test by a single intervention data set becomes less powerful. In this section, we pool all intervention data sets together to learn a causal structure.

Let ${\cal D}_j$ denote the data set of the  $j$th intervention whose joint distribution is $P_j({X})$ in Equation (\ref{cdisinterv}). The data set $\cal D$ contains all $D_j$ for $j=1,\cdots,m$. Below we present a data-pooling learning method and its evaluation in Algorithm \ref{algdp}.


\begin{algorithm}[h]
\caption{Structural learning   by pooling all intervention data sets}
  \label{algdp}
\KwIn{${\cal D}=\{{\cal D}_j,j=1,\cdots,m\}$, data from all m interventions}
\KwOut{${G'}$, a graph with v-structures and edge set;  and the frequencies of edges learned by re-sampling.}
\Begin(Meta Learning){Learn a skeleton graph ${G'}=(X, E', V')$ with v-structures
using the conditional independence tests
in which the pooled data set $\cal D$ is used via the PC algorithm.\\}
\Begin(Evaluation ){
  \For {$i=1 \ to \ K$} {Draw a subset ${\mathbb I}_i$
randomly from $\{1,\cdots,m\}$ without replacement.\\
Pooling the   data together from the drawn   data sets:
$D_{{\mathbb I}_i} = \cup_{j \in {\mathbb I}_i} {\cal D}_j$. \\
Learn a skeleton graph ${G}_i'=(X, E_i' )$
from  the pooled data set $D_{{\mathbb I}_i}$.}
Let $E^*=\bigcup_{i=1}^K E_i'$ and  $\bar{E'}=E^*\setminus E'$. \\
For any edge in $E'$ and $\bar{E'}$, calculate the frequency of the edge appearing in $\{{G}_i',{i=1,\cdots,K}\}$.\\ \label{eav}}

 \Return{${G}'$,  the frequencies of edges in    $E'$ and  $\bar{E'}$}
\end{algorithm}

In Algorithm \ref{algdp}, we first give a meta learning of the underlying graph from the pooled data in Step 1, and then evaluate it with an intervention sampling technique.  We will show the correction of the meta learning in Theorem \ref{pdtheo},  and then we discuss  the evaluation of edges according to their frequencies.

{Let $I$ be a categorical variable with $m$ values $\{1,\ldots, m\}$
to indicate m interventions,
and the probability distribution $P(I=j)=p_j$, $p_j> 0$ and $\sum_{j=1}^m p_j = 1$.
Suppose that the data set $\cal D$ is generated as follows:
{(1) generate the frequencies $(n_1, \ldots, n_m)$ of $I$ from the probabilities
and $N=\sum_j n_j$, and}
(2) draw a data set $D_j$ of $X$ with the sample size $n_j$ from
the joint distribution $P_j$ defined in Equation (\ref{cdisinterv}).  Clearly, the data in  ${\cal D}=\bigcup_{j=1}^m D_j$ are independently and identically distributed (iid) {from the mixture joint distribution}
\begin{equation}\label{mdisinterv}
P_{M}({X}) = \sum_{j=1}^mP({X}|I=j)P(I=j)=\sum_{j=1}^mp_jP_j({X}).
\end{equation}
}

 Define $M=\bigcup_{j=1}^m M_j$ and $O=\bigcap_{j=1}^m O_j = {X}\setminus M$, that is, $M$ is the set of all manipulated  targets, and $O$ is the common observational variables in all interventions. We give the Markov properties of $P_{M}$ with respect to the underlying graph $G $ as follows.

\begin{mlemma} \label{markov}
For any $x_i \in O$, the vertex $x_i$ is independent of the
vertex set $nd(x_i)$ given the parent set $pa_i$ under the mixture
distribution $P_{M}$, denoted as $(x_i \ind nd(x_i)|pa_i)_{P_{M}}$.
\end{mlemma}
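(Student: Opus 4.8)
The plan is to show that the conditional independence $(x_i \ind nd(x_i)\mid pa_i)$ holds in each component distribution $P_j$, and then argue that it is inherited by the mixture $P_M = \sum_j p_j P_j$. For the first part, fix $x_i \in O$. Since $O = \bigcap_j O_j$, the vertex $x_i$ is an observational (non-target) variable in \emph{every} intervention $j$, so by Equation (\ref{cdisinterv}) its conditional factor in $P_j$ is exactly $P(x_i \mid pa_i)$, the same as in the original factorization (\ref{factor}). I would like to conclude from this that $x_i$ satisfies the local Markov property relative to $G$ under $P_j$. The cleanest route is: the post-intervention DAG $G_j$ is an edge-deleted subgraph of $G$ (only arrows into targets are removed), and $P_j$ factorizes according to $G_j$ via (\ref{cdisinterv}); hence $P_j$ obeys the local Markov property for $G_j$, i.e. $(x_i \ind nd_{G_j}(x_i)\mid pa^j_i)_{P_j}$. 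Since $x_i \in O_j$ we have $pa^j_i = pa_i$, and since $G_j$ has a subset of the edges of $G$, the non-descendant set only grows: $nd_G(x_i) \subseteq nd_{G_j}(x_i)$. Therefore $(x_i \ind nd_G(x_i)\mid pa_i)_{P_j}$ holds for every $j$.

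For the second part I need: if a conditional independence $(A \ind B \mid C)$ holds under each $P_j$ \emph{with the same conditional law} $P_j(A \mid C) = P(A\mid C)$ not depending on $j$, then it holds under the mixture. This is the key point and the only subtle one: a conditional independence is generally \emph{not} preserved under mixing, but it is when the conditional distribution of one side given the conditioning set is common across all mixands. Concretely, write $A = \{x_i\}$, $B = nd(x_i)$, $C = pa_i$. For each $j$, $P_j(x_i \mid nd_i, pa_i) = P_j(x_i \mid pa_i) = P(x_i \mid pa_i)$, where the first equality is the per-component independence just established and the second is because the factor is unchanged by the intervention. Then
\[
P_M(x_i \mid nd_i, pa_i) = \frac{\sum_j p_j P_j(x_i, nd_i, pa_i)}{\sum_j p_j P_j(nd_i, pa_i)} = \frac{\sum_j p_j P(x_i\mid pa_i) P_j(nd_i, pa_i)}{\sum_j p_j P_j(nd_i, pa_i)} = P(x_i \mid pa_i),
\]
which does not depend on $nd_i$; hence $(x_i \ind nd(x_i)\mid pa_i)_{P_M}$. (One should note the denominator is positive on the support, and handle the density/mass function notation uniformly.)

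I expect the main obstacle to be the second step: making precise the claim that the common-conditional form of the independence survives mixing, and being careful that it is $nd(x_i)$ — not the full complement of $\{x_i\}\cup pa_i$ — that appears, and that $M$-vertices may genuinely destroy independencies so the restriction $x_i\in O$ is essential. A secondary bookkeeping point is verifying $nd_G(x_i)\subseteq nd_{G_j}(x_i)$ and $pa^j_i = pa_i$ for $x_i\in O\subseteq O_j$, both of which follow immediately from the fact that interventions only delete arrows pointing into targets. Everything else is the routine manipulation of sums and conditional probabilities shown above.
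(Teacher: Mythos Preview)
Your argument is correct and is a genuinely different route from the paper's. The paper proceeds by brute-force marginalization: it fixes a topological prefix $Y=(x_{i_1},\ldots,x_i)$ containing $x_i$ and all of $nd(x_i)$, writes $P(Y,I=j)$ from the factorization (\ref{cdisinterv}), pulls out the common factor $P(x_i\mid pa_i)$, and then sums over $j$ and over the extra coordinates to obtain $P_M(x_i,nd_i,pa_i)=P(x_i\mid pa_i)\cdot g(nd_i,pa_i)$ directly. Your approach instead separates the two conceptual ingredients: (i) each $P_j$ is Markov to the edge-deleted subgraph $G_j$, and since $pa_i^j=pa_i$ and $nd_G(x_i)\subseteq nd_{G_j}(x_i)$ for $x_i\in O$, the per-component independence holds with the \emph{same} conditional $P_j(x_i\mid pa_i)=P(x_i\mid pa_i)$; (ii) a mixture preserves a conditional independence whenever the conditional law of one side given the conditioning set is identical across all components. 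The paper's computation is self-contained but opaque; your decomposition isolates exactly why $x_i\in O$ is needed (for the common conditional in step (ii)) and yields a reusable lemma about mixtures. The only place to tighten is the line ``the second is because the factor is unchanged by the intervention'': you are implicitly using that in a recursive factorization over $G_j$ the factor attached to $x_i$ equals $P_j(x_i\mid pa_i^j)$, which is standard but worth stating once.
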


\begin{proof}
 For any $x_i \in O$, we first
obtain an order as $(x_{i_1},\cdots,x_i)$ in which all non-descendants of
$x_i$  are ranked before  $x_i$ and every vertex is behind of its parents in
the sequence as follows 
\begin{enumerate}
\item  let $L=X$,$Y=()$;
\item  if there exists
non-descendant $x$ of $x_i$ and there is no parents of $x$ in $L$, add $x$ to the end of $Y$ and $L=L\backslash \{x\}$; 
\item repeat 2, until $L$ only
contains $x_i$ and its descendants;
\item add $x_i$ to the end of $Y$.
\end{enumerate}

 Denoting $Y=(x_{i_1},\cdots,x_i)$,  we have $nd(x_i)\subseteq Y$ and

$$\begin{array}{rl}P(Y,I=j)=&P(Y|I=j)P(I=j)=P(Y|I=j)p_j\\
=&\prod_{x_k\in O_j\cap Y}P(x_k|pa(x_k))\prod_{x_k\in M_j\cap
Y}P_j(x_k|pa(x_k))p_j
\\
=&P(x_i|pa(x_i))\prod_{x_k\in \{O_j\cap Y\}\backslash
x_i}P(x_k|pa(x_k))\prod_{x_k\in M_j\cap Y}P_j(x_k|pa(x_k))p_j
\\
=&P(x_i|pa(x_i))f(\{Y\backslash x_i,I=j\} ),
\\
\end{array}$$
where $f(\{Y\backslash x_i,I\} )=\prod_{x_k\in \{O_j\cap
Y\}\backslash x_i}P(x_k|pa(x_k))\prod_{x_k\in M_j\cap
Y}P_j(x_k|pa(x_k))p_j$.

Let $Z=\{x_i,nd(x_i) \backslash pa(x_i),pa(x_i)\}$. From the construction of $Y$, we have $Z\subseteq Y$ and
$$\begin{array}{rl}P_{M}(Z)=
&\sum_{j=1}^m\sum_{x_k\notin
\{x_i,nd(x_i),pa(x_i)\}}P(Y,I=j)\\
=&\sum_{j=1}^m\sum_{x_k\notin
\{x_i,nd(x_i),pa(x_i)\}}P(x_i|pa(x_i))f(\{Y\backslash x_i,I=j\})
\\
=&P(x_i|pa(x_i))g(nd(x_i)\backslash pa(x_i),pa(x_i)),
\\
\end{array}$$
where $g(nd(x_i)\backslash pa(x_i),pa(x_i))=\sum_{j=1}^m\sum_{x_k\notin
\{x_i,nd(x_i),pa(x_i)\}}f(\{Y\backslash x_i,I=j\}).$
Therefore,
$$
P_{M}(nd(x_i),pa(x_i))=\sum_{x_i}P_{M}(Z)=g(nd(x_i)\backslash pa(x_i),pa(x_i)).
$$
We have
$$P_{M}(Z)=P(x_i|pa(x_i))P_{M}(nd(x_i)\backslash pa(x_i),pa(x_i))=P(x_i|pa(x_i))P_{M}(nd(x_i)|pa(x_i))P_{M}(pa(x_i)).$$
Thus we get,
$$P_{M}(nd(x_i),x_i|pa(x_i))=\frac{P_{M}(Z)}{P_{M}(pa(x_i))}=P(x_i|pa(x_i))P_{M}(nd(x_i)|pa(x_i)).$$
That is, we have $(v\ind
 nd(x_i)|pa(x_i))_{P_{M}}.$
 \end{proof}

 \citet{eberhardt2008sufficient}  shows that two data sets from two interventions with different targets can be pooled to test $x_i\ind x_j|S$ if $S$ can separate $(x_i,x_j)$ from the targets in each intervention.  Below we give an example to show the difference between  Eberhardt's condition and Lemma \ref{markov}.

\textbf{Example 2.} Consider a DAG with edges $x_1\to x_2\to x_3\to x_4\to x_5$ and $x_1\to x_5$. We implement two interventions, one on $x_1$ and the other on $x_5$.  $x_4\ind x_2|x_3$ can be confirmed   by the pooling data set $\cal D$ of two interventions.  From Lemma \ref{markov}, we have $x_4\ind x_2|x_3$ by $P_{M}$. However, this does not satisfy the condition required by Eberhardt since $x_3$ does not separate $(x_2,x_4)$ from $x_1$.

 We introduce another Markov property of  the mixture joint distribution with respect to a DAG $G$.

\begin{mlemma} \label{markov2}
For any non-adjacent pair $\{x_i, x_j\}$ which is contained in $O$,
there is a subset $S$ of ${X}$  such
that $(x_i \ind x_j | S)_{P_{M}}$.
\end{mlemma}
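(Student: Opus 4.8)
The plan is to reduce the statement to the local Markov property already proved in Lemma \ref{markov}, by taking $S$ to be a suitable parent set and then invoking the decomposition axiom of conditional independence.

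First I would use acyclicity of $G$ to locate a convenient ``root'' vertex. Since $x_i$ and $x_j$ are non-adjacent, at least one of them is a non-descendant of the other: if $x_j \notin nd(x_i)$, then $x_j$ is a descendant of $x_i$, so $x_i$ is an ancestor of $x_j$ and, by acyclicity, $x_i \in nd(x_j)$. Because both $x_i$ and $x_j$ lie in $O$, we may relabel the pair if necessary and assume without loss of generality that $x_j \in nd(x_i)$ with $x_i \in O$, so that Lemma \ref{markov} applies to $x_i$.

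Next, since the pair $\{x_i,x_j\}$ is non-adjacent, $x_j$ is neither a parent nor a child of $x_i$; in particular $x_j \in nd(x_i)\setminus pa_i$. Lemma \ref{markov} gives $(x_i \ind nd(x_i)\mid pa_i)_{P_M}$, i.e.\ $x_i$ is independent of the whole non-descendant set given $pa_i$ under the mixture distribution $P_M$. Applying the decomposition property of conditional independence --- which holds for every probability distribution --- to the singleton $\{x_j\}\subseteq nd(x_i)\setminus pa_i$ yields $(x_i \ind x_j \mid pa_i)_{P_M}$. Hence $S=pa_i$ is the desired separating set, which proves the claim.

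I do not expect a serious obstacle here: the only points needing care are the acyclicity argument that guarantees a root vertex lying in $O$ (so that Lemma \ref{markov} is usable), and the fact that we must appeal to the decomposition axiom directly rather than to any graphical (d-separation) criterion, since $P_M$ is only known to satisfy the restricted Markov statement of Lemma \ref{markov} and need not be globally Markov or faithful to $G$.
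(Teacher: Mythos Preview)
Your argument is correct and matches the paper's proof essentially step for step: both assume without loss of generality that $x_j\in nd(x_i)$, take $S=pa_i$, and read off $(x_i\ind x_j\mid pa_i)_{P_M}$ from Lemma~\ref{markov}. You are simply more explicit than the paper about the acyclicity argument and the use of the decomposition axiom.
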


\begin{proof} For any pair $(x_i,x_j)$ of
non-adjacent vertices belong to $O$ in causal graph $G$, without of generality,
we suppose that $x_j$ is non-descendant of $x_i$, and then $x_j\in
nd(x_i)$. Letting $S=pa(x_i)$, we have $x_j\notin S$ and $(x_i\ind
x_j|S)_{P_{M}}$ according to Lemma \ref{markov}.
\end{proof}

This result means that the mixture distribution $P_{M}$ has the
pairwise Markov property for any pair of non-adjacent vertices in the
observational set $O$. According to this lemma, there is an
 edge connecting vertices $x_i$ and $x_j$ in $O$ if
$(x_i \nind x_i | S)_{P_{M}}$ for any subset $S$ of ${X}$.
Similar to the traditional constraint-based methods,
we also need  the concept of faithfulness assumption to show the correctness  of a causal learning method.

\begin{assumption}\label{supp1} {\it The faithfulness assumption of $P_{M}$ to $G$ over the observational set $O$.}
{We say under Assumption 3 of the conservative interventions}
that $P_{M}$ is faithful to the network ${G}$ over the
observation set $O$ if, for any pair of vertices $x_i$ and $x_j$ in
$O$, there is a set $S$ which d-separates the vertices $x_i$ and $x_j$
(denoted as $(x_i \ind x_j|S)_{G}$) when the conditional
independence $(x_i\ind x_j|S)_{P_{M}}$ holds, where $S$ is a subset
of ${X} \setminus \{x_i, x_j\}$.
\end{assumption}

Unlike Assumption 1, we do not require that the faithfulness assumption holds
for vertices in the target set $M$ since spurious independencies among
these vertices may be introduced due to the interventions. Under Assumption \ref{supp1}, we present the following two
results which ensure that edges and v-structures contained in the
observational set $O$ can be discovered correctly.

\begin{mlemma}\label{theorem2}
For a pair of vertices $x_i$ and $x_j$ contained in $O$,    under Assumption \ref{supp1}, $x_i$ and $x_j$ are
adjacent in a DAG ${G}$ if and only if
$(x_i\nind x_j|S)_{P_{M}}$ holds  for
any subset $S$ of ${X}$.
\end{mlemma}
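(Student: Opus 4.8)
The plan is to prove the two implications separately, using Lemma \ref{markov2} for one direction and the faithfulness Assumption \ref{supp1} for the other. Throughout I will read ``any subset $S$ of $X$'' as ``any subset $S$ of $X\setminus\{x_i,x_j\}$'', since conditioning on an endpoint makes the (in)dependence statement degenerate.

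For the ``only if'' direction, that adjacency of $x_i,x_j$ in $G$ forces $(x_i\nind x_j|S)_{P_{M}}$ for every admissible $S$, I would argue by contraposition. Suppose some $S\subseteq X\setminus\{x_i,x_j\}$ satisfies $(x_i\ind x_j|S)_{P_{M}}$. Since $x_i,x_j\in O$, Assumption \ref{supp1} applies and gives $(x_i\ind x_j|S)_{G}$, i.e. $S$ d-separates $x_i$ and $x_j$ in $G$. But two vertices joined by an edge cannot be d-separated by any set, because the edge between them is itself an active path that no conditioning set can block; hence $x_i$ and $x_j$ are non-adjacent in $G$, which is the contrapositive of what we want.

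For the ``if'' direction, that $(x_i\nind x_j|S)_{P_{M}}$ for all admissible $S$ implies adjacency, I would again use contraposition: assume $x_i$ and $x_j$ are non-adjacent in $G$. Both lie in $O$, so Lemma \ref{markov2} supplies a set $S$ --- concretely $S=pa(x_i)$ when $x_j\in nd(x_i)$, and symmetrically otherwise, which is a subset of $X\setminus\{x_i,x_j\}$ --- with $(x_i\ind x_j|S)_{P_{M}}$. This contradicts the assumed dependence for every $S$, so $x_i$ and $x_j$ must be adjacent.

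I do not expect a genuine obstacle here: the substance has already been isolated in Lemma \ref{markov2} (which in turn rests on the mixture Markov property of Lemma \ref{markov}) and in Assumption \ref{supp1}. The only points needing care are the degenerate case of an $S$ that contains an endpoint, and making explicit the elementary graphical fact that adjacent vertices admit no d-separating set --- that fact is exactly what converts ``not d-separated in $G$'' back into ``adjacent in $G$'' in the first implication.
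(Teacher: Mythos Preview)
Your proposal is correct and mirrors the paper's own proof almost exactly: both directions are argued by contraposition, invoking Lemma~\ref{markov2} to produce a separating set when $x_i,x_j\in O$ are non-adjacent, and invoking Assumption~\ref{supp1} to pass from $(x_i\ind x_j|S)_{P_M}$ to $(x_i\ind x_j|S)_G$ and hence to non-adjacency. Your added remarks about excluding endpoints from $S$ and about the elementary fact that adjacent vertices admit no d-separating set simply make explicit what the paper leaves implicit.
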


\begin{proof}   For any $x_i\in O$, and $x_j\in O$ , if $x_i$ and $x_j$ are
not adjacent in $G$, from Lemma \ref{markov2}, we can get a
subset $S$ such that $(x_i\ind x_j|S)_{P_{M}}$. If there is no subset
$S\subseteq {X}$ such that $(x_i\ind x_j|S)_{P_{M}}$, then $x_i$ and
$x_j$ are adjacent in $G$. Because
 $P_{M}$ is faithful to $G$ over   $O$ according to Assumption \ref{supp1},   if there is a subset
 $S$ such that $(x_i\ind x_j|S)_{P_{M}}$ then $(x_i\ind x_j|S)_{G}$, that is $x_i$
 and $x_j$ is not adjacent in $G$. So if $x_i$ and $x_j$ are adjacent in $G$,
then there is no subset $S\subseteq {X}$ such that $(x_i\ind
x_j|S)_{P_{M}}$.
\end{proof}

\begin{mlemma}\label{theorem3}
Suppose that vertices $x_i$ and $x_k$ are adjacent and $x_j$, $x_k$ are
adjacent, but $x_i$ and $x_j$ are not adjacent in $G$ where
$x_i$, $x_j$ and $x_k$ are contained in $O$. Under Assumption \ref{supp1},   $x_i \rightarrow
x_k \leftarrow x_j$ is a subgraph of $G$ if and only if  $x_k
\in S$ implies $(x_i \nind x_j|S)_{P_{M}}$ for any subset $S$ of $X$.
\end{mlemma}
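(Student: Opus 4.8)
The plan is to adapt the standard PC‑algorithm v‑structure argument to the mixture distribution $P_{M}$, proving the two implications separately and being careful that every conditional‑independence statement we touch concerns only pairs and separating sets that are legitimate inputs to Lemma \ref{markov} or Assumption \ref{supp1}. Throughout I read ``any subset $S$ of $X$'' as any $S \subseteq X \setminus \{x_i,x_j\}$, consistent with Lemma \ref{theorem2} and Assumption \ref{supp1}.

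For the ``only if'' direction, assume $x_i \to x_k \leftarrow x_j$ is a subgraph of $G$ and fix any $S$ with $x_k \in S$. The two‑edge path $x_i \to x_k \leftarrow x_j$ has $x_k$ as its only collider and no non‑collider interior vertex, so since $x_k \in S$ this path is d‑connecting given $S$; hence $x_i$ and $x_j$ are not d‑separated by $S$ in $G$. Applying the contrapositive of the faithfulness of $P_{M}$ over $O$ (Assumption \ref{supp1}) to the pair $x_i,x_j \in O$ then gives $(x_i \nind x_j \mid S)_{P_{M}}$, as required.

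For the ``if'' direction I would argue by contradiction. Both edges $x_i - x_k$ and $x_k - x_j$ are present and $x_i,x_j$ are non‑adjacent, so if the triple is not the v‑structure $x_i \to x_k \leftarrow x_j$ then $x_k$ is a non‑collider on the path $x_i - x_k - x_j$, leaving three configurations: $x_i \to x_k \to x_j$, $x_i \leftarrow x_k \to x_j$, and $x_i \leftarrow x_k \leftarrow x_j$. In each case I would exhibit a separating set containing $x_k$. In the fork case $x_k \in pa(x_i)\cap pa(x_j)$, and acyclicity forbids each of $x_i,x_j$ from being an ancestor of the other, so one of them, say $x_j$, lies in $nd(x_i)$; then $S = pa(x_i)$ contains $x_k$, contains neither $x_i$ nor $x_j$ (non‑adjacency), and by Lemma \ref{markov} applied to $x_i$ together with decomposition of conditional independence we get $(x_i \ind x_j \mid S)_{P_{M}}$. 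In each chain case the source vertex is a non‑descendant of the sink by acyclicity, $x_k$ is a parent of the sink, and Lemma \ref{markov} applied to the sink vertex again produces such an $S$. This contradicts the hypothesis that every $S$ containing $x_k$ has $(x_i \nind x_j \mid S)_{P_{M}}$, so the triple must be the claimed v‑structure.

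The work is mostly bookkeeping; the one point needing care — and the reason the textbook proof does not transfer verbatim — is that we only possess the Markov property (Lemma \ref{markov}) and faithfulness (Assumption \ref{supp1}) over the common observational set $O$, not for all of $X$. So I must verify that the separating sets built above, being parent sets in $G$, are admissible in Lemma \ref{markov} even though they may contain manipulated targets, and that Assumption \ref{supp1} is invoked only on the pair $(x_i,x_j)$, which lies in $O$. The graphical facts used (a conditioned collider with no other interior vertex opens a two‑edge path; acyclicity rules out mutual ancestry) are elementary and I would simply cite the definition of d‑separation rather than reprove them.
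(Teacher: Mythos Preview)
Your argument is correct. The ``only if'' direction is essentially identical to the paper's: both use the contrapositive of Assumption~\ref{supp1} on the pair $(x_i,x_j)\in O$ together with the observation that a conditioned collider on a two-edge path leaves that path active.

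The ``if'' direction differs in a small but genuine way. The paper argues indirectly: since $x_i,x_j\in O$ are non-adjacent, Lemma~\ref{markov2} supplies some $S$ with $(x_i\ind x_j\mid S)_{P_M}$; Assumption~\ref{supp1} then lifts this to $(x_i\ind x_j\mid S)_G$; and because $x_k$ is a non-collider on the path $x_i-x_k-x_j$, every d-separating set of $x_i,x_j$ in $G$ must contain $x_k$, so $x_k\in S$, contradicting the hypothesis. You instead construct the separating set explicitly as a parent set (of the sink in the chain cases, of whichever endpoint is a non-descendant of the other in the fork case) and invoke Lemma~\ref{markov} directly. Your route is slightly more hands-on but has the virtue of not appealing to the faithfulness Assumption~\ref{supp1} at all in this direction; it shows that the ``if'' implication is purely a Markov-property fact. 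The paper's route is a touch shorter and avoids the case split, at the cost of using faithfulness in both directions. Either is fine; your care in checking that Lemma~\ref{markov} is applied only to a vertex in $O$ while the parent set $S$ may contain targets is exactly the point that needs watching here.
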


\begin{proof}
   For any $x_i\in O$ , $x_j\in O$ , if there is a subset $S$, where $x_k\in S$, such
that $(x_i\ind x_j|S)_{P_{M}}$, then $(x_i\ind x_j|S)_{G}$ according to Assumption \ref{supp1}. From the
definition of d-separation,  we can get that $x_i,x_k,x_j$ must not be
head to head, since  $x_i\rightarrow x_k \leftarrow x_j$ is subgraph of
$G$ and $x_k \in S$ implies $(x_i\nind x_j|S)_{P_{M}}$. On the other
hand, because $x_i$ and $x_k$, $x_k$ and $x_j$ are adjacent and $x_i$ and $x_j$
are not adjacent in $G$, if  $x_i\rightarrow x_k \leftarrow x_j$ is
not subgraph of $G$, then the structure of $x_i,x_k$ and $x_j$ should
be $x_i\rightarrow x_k \rightarrow j$, $x_i\leftarrow x_k \leftarrow x_j$ or
$x_i\leftarrow x_k \rightarrow x_j$, and thus  all d-separation set of $x_i$ and
$x_j$ must be including $x_k$. Since $x_i$ and $x_j$ are not adjacent,   there
are a set $S$  such that $(x_i\ind x_j|S)_{P_{M}}$, and then $(x_i \ind
x_j|S)_{G}$, thus we can get that $x_k \in S$. It means that if $x_k \in
S$ implies $(x_i\nind x_j|S)_{P_{M}}$, then $x_i\rightarrow x_k \leftarrow x_j$
is a subgraph of $G$.
\end{proof}

 Let $G_O $ be the induced subgraph of $G $ over the observational set $O$. With Lemmas \ref{markov}, \ref{theorem2} and \ref{theorem3},
we obtain the following main result of this section.

\begin{theorem}\label{pdtheo} Let $G'=(E',V')$ denote the graph obtained by the meta learning of Algorithm \ref{algdp}. If Assumption \ref{supp1} holds and there is no statistical errors in independence tests, then  all of edges and v-structure in $G_O'$ are exactly the same as those in $G_O$.
\end{theorem}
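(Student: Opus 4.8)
The plan is to run the two standard phases of the PC algorithm — skeleton construction, then v-structure orientation — against the population conditional-independence relations of the mixture distribution $P_{M}$, and to verify that each phase, after restriction to the common observational set $O$, reproduces $G_O$ exactly. The skeleton phase is governed by Lemma~\ref{theorem2} and Lemma~\ref{markov2}, the orientation phase by Lemma~\ref{theorem3}, and the only point that needs genuine work is the familiar PC technicality that conditioning sets are searched inside the current adjacency lists rather than inside all of $X$.

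\textbf{Skeleton over $O$.} Fix $x_i,x_j\in O$. If they are adjacent in $G$, Lemma~\ref{theorem2} gives $(x_i\nind x_j\mid S)_{P_{M}}$ for every $S\subseteq X$; since PC removes the edge $x_i-x_j$ only upon producing some $S$ with $(x_i\ind x_j\mid S)_{P_{M}}$, the edge is never removed and survives in $G'$. If instead $x_i,x_j$ are non-adjacent in $G$, take (without loss of generality) $x_j\in nd(x_i)$; by Lemma~\ref{markov2}, obtained from Lemma~\ref{markov}, we have $(x_i\ind x_j\mid pa_G(x_i))_{P_{M}}$. Arguing as in the correctness proof of PC, it then suffices that no true edge incident to $x_i$ is ever deleted during the run: in that case the whole of $ne_G(x_i)\supseteq pa_G(x_i)$ stays in the adjacency list of $x_i$, so at the appropriate depth PC tests $(x_i,x_j)$ against $pa_G(x_i)$ (or a subset of it) and deletes the edge. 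Hence, modulo this persistence claim, the skeleton of $G'$ restricted to $O$ coincides with that of $G_O$, and moreover every non-adjacent pair in $O$ acquires a recorded separating set.

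\textbf{Persistence of true edges at observational vertices.} This is the step I expect to be the main obstacle. If $x_j\in ne_G(x_i)\cap O$, persistence is immediate from Lemma~\ref{theorem2}. The remaining case is a parent $x_j\in pa_G(x_i)\cap M$, to which the lemmas do not directly apply. Here I would use the identity extracted in the proof of Lemma~\ref{markov}, namely that $P_{M}(x_i\mid pa_G(x_i))=P(x_i\mid pa_G(x_i))$ — the mechanism of an observational vertex is untouched by all interventions — so, by the faithfulness of the underlying $P$ to $G$, $x_i$ genuinely depends on $x_j$ and no conditioning set can render them independent under $P_{M}$; thus the edge $x_i-x_j$ persists. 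Alternatively, one may read the meta-learning as PC equipped with a perfect oracle that is allowed to search all $S\subseteq X$, in which case Lemma~\ref{theorem2} yields the skeleton over $O$ with no persistence argument needed at all.

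\textbf{V-structures over $O$.} Given the correct skeleton, consider a triple $x_i,x_j,x_k\in O$ with edges $x_i-x_k$, $x_k-x_j$ and $x_i,x_j$ non-adjacent. PC declares $x_i\to x_k\leftarrow x_j$ exactly when $x_k$ is absent from the separating set recorded for $(x_i,x_j)$. By the faithfulness of $P_{M}$ over $O$ (Assumption~\ref{supp1}), any recorded set $S$ d-separates $x_i$ and $x_j$ in $G$; consequently, if the triple is a v-structure of $G$ then no d-separator of $x_i,x_j$ contains $x_k$ (in particular $pa_G(x_i)$ and $pa_G(x_j)$ do not), so $x_k\notin S$ and PC orients it, whereas if it is not a v-structure then every d-separator of $x_i,x_j$ contains $x_k$, so $x_k\in S$ and PC leaves it unoriented. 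This is precisely the dichotomy of Lemma~\ref{theorem3}, so the v-structures of $G'$ among vertices of $O$ agree with those of $G_O$. Since $G_O'$ and $G_O$ are by definition the induced subgraphs of $G'$ and of $G$ over $O$, whose v-structures are exactly the collider triples among $O$-vertices with non-adjacent endpoints, combining the two phases yields $G_O'=G_O$.
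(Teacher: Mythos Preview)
Your proof follows the same skeleton as the paper's own --- invoke Lemma~\ref{theorem2} for the skeleton over $O$ and Lemma~\ref{theorem3} for the v-structures --- but you go considerably further than the paper, which dispatches the theorem in two sentences by simply citing the two lemmas and the no-statistical-error hypothesis. Your extra work on the PC adjacency-search mechanics is not something the paper addresses at all; it implicitly treats the meta-learning as an oracle that may test against any $S\subseteq X$, which is exactly your ``alternative'' reading. Under that reading your argument is correct and is a more careful elaboration of the paper's.

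Your first persistence argument, however, does not quite close. You invoke faithfulness of the underlying $P$ to $G$, but Theorem~\ref{pdtheo} assumes only Assumption~\ref{supp1}, which constrains $P_M$ on pairs inside $O$; faithfulness of $P$ to $G$ is not among its hypotheses. Even granting it, the identity $P_M(x_i\mid pa_G(x_i))=P(x_i\mid pa_G(x_i))$ only forces $(x_i\nind x_j\mid pa_G(x_i)\setminus\{x_j\})_{P_M}$; it does not by itself rule out $(x_i\ind x_j\mid S)_{P_M}$ for some other $S$, for instance one containing descendants of $x_i$ lying in $M$. Since Assumption~\ref{supp1} says nothing about pairs touching $M$, such a spurious independence is not excluded by this line of argument. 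Your fallback to the full-search oracle reading is therefore the one that actually carries the proof --- and it is precisely the reading the paper tacitly adopts.
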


\begin{proof}  Under Assumption \ref{supp1}, we have that Lemma \ref{theorem2} and Lemma \ref{theorem3} hold. That is, $G_O$ can be learned correctly using the conditional independencies encoded in the underlying joint probabilities. Therefore, if there is no statistical errors in independency tests, the learned $G_O'$ is the same as $G_O$.
\end{proof}

 Theorem \ref{pdtheo} means that all edges and v-structures that are contained in the set $O$ can be discovered correctly from the mixture distribution $P_{M}$. However, between the vertices which are not contained in the set $O$, the mixture distribution $P_{M}$ may lead to spurious independencies and dependencies that do not encoded in the underlying DAG $G$. This is a cost of Algorithm \ref{algdp} for the cases in which the sample size of each intervention is small. Moreover, whether  a spurious dependence (or independence) appears in the mixture joint distribution $P_{M}$ depends on  how much it is  distorted by the interventions. In general, if only a small number of  interventions ``contaminate" the underlying independence (or dependence) of two vertices, it will keep in $P_{M}$. Therefore, many causal structures out of $O$ can be learned correctly.

Below,  we give two remarks about how to use the frequencies of edges obtained by  the re-sampling method to evaluate and improve the network learned by the meta learning of Algorithm \ref{algdp}.

\textbf{Remark.} The edges in $\bar{E'}$ are those which are not discovered from the original sample data set of all interventions
but are discovered from some re-sampling data sets.
If the frequency of such an edge $e$ is large,
it means that the edge $e$ may be missed by the spurious independency due to manipulating the relevant vertices in the interventions.
Thus we add the edge $e$ to $E'$ if its frequency appearing in $E'_i$'s is larger than a threshold.
We shall evaluate the re-sampling learning approach
using simulations in Section 4.

\section{Experimental study}\label{simulation}

In this section, we conduct three  simulations to evaluate the   proposed causal structure learning methods using the $37$-vertex Alarm network \citep{Beinlich}. We denote its causal graph as $G=(X,E)$, where $X=\{X_1,\cdots,X_{37}\}$.
{We illustrate and compare Algorithms \ref{alggm} and \ref{algdp}
in the first experiment,
then we study the performance of the different number of manipulated  targets
in the second one, and finally we discuss the re-sampling in Algorithm \ref{algdp} in the third one.}

 In each simulation, we generate artificial data as follows. We first generate $m$ interventions. In each intervention, we randomly choose   some vertices  as   targets to be manipulated, and the probability that an edge between a target and its parents is cut off is set to $0.5$.  The post-intervention conditional probabilities of each target are then generated from an uninformative Dirichlet distribution.  We finally generate a sample of size $n$ in each intervention, so there are $mn$ individuals in all interventions.
  The conditional probabilities for the underlying ALARM network
are from  \citet{Beinlich}

{We learn the skeleton and v-structures
from the artificial data generated from an underlying graph
using Algorithm \ref{alggm} or Algorithm \ref{algdp},
in which the PC algorithm is {used} and the conditional independencies
are checked by $\chi^2$ testing at a significance level   $\alpha =1\%$.  
Let $TP$ be the number of true positive edges,
$FN$ the number of false negative edges,
and $FP$ the false positive edges regardless of the edge directions;
and  let $TP_1$ be the number of true positive   directed edges or arrows,
$FN_1$ the number of false negative directed edges, and $FP_1$ the false positive directed edges, where directed edges are limited only to those in v-structures.
The true positive rate (TPR) and the true discovery rate (TDR) are defined as $TP/(TP+FP)$ and $TP/(TP+FN)$, respectively;
and the true positive rate D-TPR and  the true discovery rate D-TDR of directed edges in v-structures are defined as $TP_1/(TP_1+FP_1)$ and $TP_1/(TP_1+FN_1)$, respectively.}

In the first experiment, {for each intervention, we first generate randomly
an integer $k$ from $\{1,2,3,4,5\}$ with the same probabilities,
and we choose randomly $k$ vertices from the vertex set of ALARM as the
targets to be manipulated in the intervention.
Next we generate a sample of size $m \times n=5000$ for the multiple interventions
for each of four cases of $(n=2500, m=2)$, $(n=500,m=10)$, $(n=200,m=25)$  and $(n=100,m=50)$.
Then, we learn the skeleton and v-structures from each generated data set
using Algorithm \ref{alggm} and the meta learning of  Algorithm \ref{algdp}.
We repeat 100 simulations for each case,
and give TPR, TDR, D-TPR and D-TDR in Figure \ref{learnres}.
We can see that Algorithm \ref{alggm} works well
when the sample sizes are large for all interventions,
and that it works worse for small samples.
Algorithm \ref{algdp} works better than Algorithm \ref{alggm}
when the sample sizes are relative small.
}

\begin{figure}[h]
\centering
\includegraphics[scale=0.5]{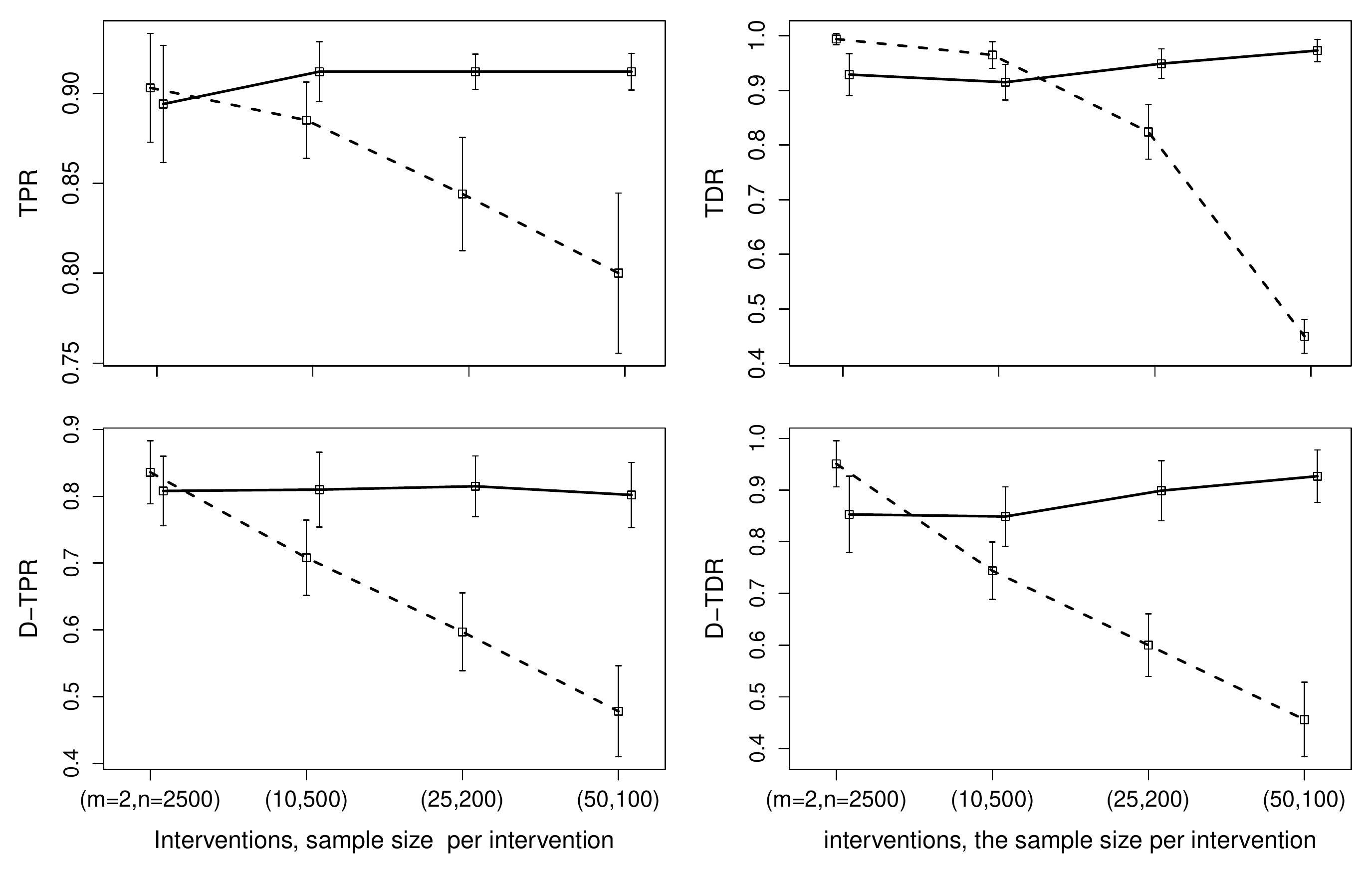}
\caption{TPR, TDR, D-TPR and D-TDR  of structural learning with Algorithm \ref{alggm} (dash lines) and  with Algirithm \ref{algdp} (solid lines). The top, median, and bottom of each error bar display  mean plus standard error,  mean, and  mean minus standard error, respectively.} \label{learnres}
\end{figure}

{From Figure \ref{learnres}, we can also see that with the increasing of the number $m$ of intervention data sets from $2$ to $50$, Algorithm \ref{algdp} has  larger true discovery rate TDR. It coincides with our discussion about the interventions with unknown target below Theorem \ref{pdtheo} in Section \ref{mixedchisq}. Since targets are chosen randomly, the distribution of the manipulated targets are more uniform over all vertices for the case of $(n=100, m=50)$ than those for other cases,
and thus each target is manipulated fewer times.
So the distortion of the dependencies relevant to a manipulated vertex
is much weaker than that for the case of $(n=2500, m=2)$,
in which the chosen target is manipulated  in at least a half of samples (2500).  

In the second experiment, we generate the data sets only for $(n=100, m=50)$.
We set the numbers of manipulated targets in all interventions to be a constant $C$
for a case, and we use four constants $C=2, 5, 10, 20$ as four cases.
We apply the meta learning of Algorithm \ref{algdp} to the generated data sets.
We repeat 100 simulations for each case
and report the means of TPR, TDR, D-TPR and D-TDR in Figure \ref{targetexp}.
We can see that the performance of Algorithm \ref{algdp} becomes worse for
the case of a larger constant $C$.
 It means that the larger the number of manipulated variables is,
the worse the learning performance is. 
}

\begin{figure}[h]
\centering
\includegraphics[scale=0.4]{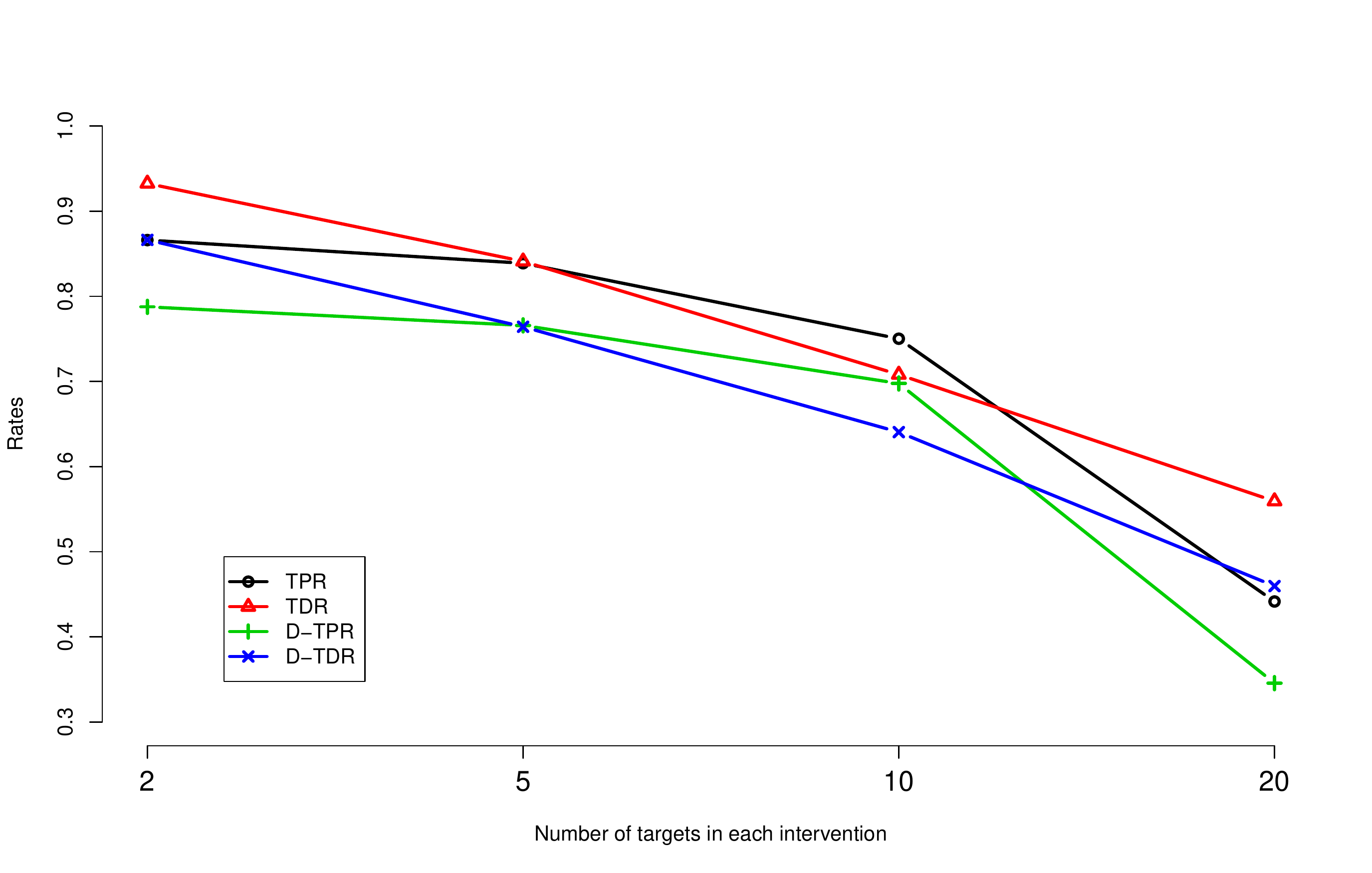}
\caption{TPR, TDR, D-TPR and D-TDR  of causal learning with Algorithm \ref{algdp} with different numbers of manipulated targets.} \label{targetexp}
\end{figure}

{
In the third experiment, we apply the re-sampling technique to the interventions
but not to the individuals in the original sample.
Since an intervention deletes some of edges,
it may make some of spurious independencies.
Making use of the re-sampling of the interventions,
we can see which edges are frequently found from the re-sampling data sets
and which are infrequently found.
Especially we should add these edges to the graph
which are frequently found
but are not found from the original data set of all interventions.
 In this experiment, we use the case of $(n=100,m=50)$ to illustrate
the performance of the re-sampling learning approach.
Let $G'=(X,E')$ denote the graph learned from the original data set
of all interventions by Algorithm \ref{algdp}.
We repeat $K=100$ re-samplings,
and we randomly draws 30 interventions from $m=50$ interventions
as the $i$th re-sampling data set for $i=1, \ldots, K$.
From the $i$th re-sampling data set, we learn the $i$th graph $G_i'=(X,E_i')$.
Let $E^*=\bigcup_{i=1}^{100} E_i'$, the union of the edge sets $E_i'$;
and $\bar{E'}=E^* \setminus E'$,
the set of edges that appear in $E^*$ but not in $E'$.
{For each edge $e$ in $E'\bigcup E^*$, we count the frequency $f(e)$ of each edge $e$ appearing in
all $E'_i$'s,
that is, $f(e)=\sum_{i=1}^{K} I_{e\in E_i'}$,
where $I(\cdot)$ is an indicator  function.}
Intuitively, for an edge $e \not\in E'$, we should add $e$ to $E'$ if $f(e)$ is larger; but for an edge $e \in E'$, we may or may not remove $e$
from $E'$ even if $f(e)$ is smaller.}
{In Table \ref{boots}, we show the frequencies of some edges in $E^*$.
In the upper part of Table 1,
for these edges in $\bar{E'}$,
we give 10 edges which have   the   largest frequencies among all edges in $E^*$,
these edges labelled `True' are the true edges,
and these labelled `False' are the false edges.
We can see that the top 3 edges with frequencies $\geq 68$ should be added to $E'$,
and other edges have frequencies $\leq 35$.
In the lower part of Table 1,
for these edges in $E'$, we show 10 edges which have the smallest frequencies.
We can see that the top 4 edges with frequencies $\leq 50$ have 2 false edges.

We repeat the above process for 100 times
in which we first generate a sample of the underlying ALARM network for the case $(n=100,m=50)$, next apply the re-sampling technique to the sample,
and then obtain a frequency table like Table 1.
From the 100 frequency tables obtained from the 100 repetitions,
we calculate the frequencies of `True' edges in the corresponding orders,
and we give the results in Table \ref{eval}.
From the upper part of Table \ref{eval},
we can see
that an edge $e \in \bar{E}'$ with a larger $f(e)$
has a larger frequencies to be a true edge.
From the lower part of Table \ref{eval},
we can see
that an edge $e \in E'$ with a smaller $f(e)$
has a smaller frequencies to be a true edge.

%

Our goal of the re-sampling is to find  the spurious independencies
due to the interventions,
we consider only to `add edges to $E'$' but not to `remove edges from $E'$.
Now we consider the threshold $\theta$ of
the frequencies $f(e)$'s for adding edges to $E'$.
After obtaining $G'=(X, E')$ learned by Algorithm \ref{algdp},
we add an edge $e$ in $\bar{E'}$ to $E'$ if $f(e) > \theta$.
Below we show the simulation results for various thresholds.
Let FN denote the average number of false negative edges
and FP the average number of false positive edges in the 100 learned graphs.
We give FNs and FPs for different $\theta$ in Table \ref{prunegrow}.
For the simulation results,
the best threshold is $\theta=20$
since the sum of FN=3.37 and FP=1.32 is the least.
A suitable threshold may be between 20 and 50,
for which not so many edges are added to the learned graph.


}

\begin{table}[h!]
\caption{The 10 edges having the largest frequencies in the $f(e)$ descending ($\downarrow$) order, and the 10 edges having the smallest frequencies in the $f(e)$ ascending ($\uparrow$) order. $i$-$j$ denotes a skeleton edge $e$ between vertices $i$ and $j$.} \label{boots}
\begin{center}
\begin{tabular}{ccccccccccc}  \hline
$f(e)\downarrow$   & 1 & 2 & 3 & 4& 5&6&7&8&9&10 \\ \hline
  $e \in \overline{E'}$  & 25-30 & 17-23 & 2-4 & 30-31 & 21-32 & 15-16 & 34-37 & 9-34 & 1-27 & 11-16 \\
   $f(e)$   &         79 &   77 &   68&    35&    10 &    2&     2&     2&     1&    1 \\
  $e\in E$  &True&True&True&False&False&True&False&False&False&True \\
 \hline \hline
$f(e)\uparrow$   & 1 & 2 & 3 & 4& 5&6&7&8&9&10\\\hline
    $e \in E'$   & 4-27  & 9-17  & 24-25 & 17-34 & 16-17  & 2-3 & 18-19 & 21-22 & 25-31 & 14-15   \\
    $f(e)$   &23    &24    &31   & 44   & 61    &89   & 90   & 90   & 90   & 96      \\
   $e\in E$  &  False &  True &  True &  False &    True &True&True&True&True&True  \\ \hline
\end{tabular}
\end{center}
\end{table}


\begin{table}[h!]
\caption{The frequencies of an edge at a top order
to be a true edge in 100 repetitions}\label{eval}
\begin{center}
\begin{tabular}{ccccccccccc}  \hline
$f(e)(\downarrow)$ order for $e \in \overline{E'}$ & 1 & 2 & 3 & 4& 5&6&7&8&9&10\\
Freq. to be a true edge in $E$        &95&76&64&41&30&24&13&8&7&7   \\  \hline
$f(e)(\uparrow)$ order for $e \in E'$ & 1 & 2 & 3 & 4& 5&6&7&8&9&10\\
Freq. to be a true edge in $E$     &55&77&91&96&99&100&100&100&100&100                    \\ \hline
\end{tabular}
\end{center}
\end{table}

\begin{table}
\caption{The simulation results of FN and PF
for different thresholds in the re-sampling learning}\label{prunegrow}
\begin{center}
\begin{tabular}{ crrrrrrrrrr} \hline
 $\theta$ &0& 5&   10 &15&   20  & 25&  30 &35&   50  & 100 \\ \hline
FN &  1.8 &2.64& 3.03&3.26& 3.37 &3.53& 3.73 &3.9&  4.09 &  5.13 \\
FP&9.82 &3.25& 2.08&1.58& 1.32 & 1.18&1.11&1.05&0.97 &  0.96 \\
Sum& 11.62 & 5.89  &5.11 & 4.84 & 4.69 &  4.71&  4.84 & 4.95&  5.06&  6.09\\  \hline
\end{tabular}
\end{center}
\end{table}

{
Finally we give the simulation results
of learning the ALARM network
from an observational data set without interventions.
In each simulation,
we first generate an observed data set of the same sample size 5000
from the distribution without interventions,
and then we apply the PC algorithm to learn a graph.
We repeat 100 simulations and obtain FN=5.51 and FP=0.22.
According to the sum of FN and FP,
this result (5.51+0.22) of learning from an observational data set
is better than that (5.13+0.96 for $\theta=100$) of
learning from a multiple intervention data set and worse than that (3.37+1.32 for $\theta=20$) of learning from the re-sampling   learning approach.   

}

\section{Discussion}\label{disccussion}

In this paper, we study how to learn causal structures from a data set with multiple interventions of unknown targets. Two approaches are presented to learn causal  structures from  large or small samples in each intervention. We show that the graph merging method works well when each intervention has a large enough sample to learn a graph efficiently, while the pooling data method is preferable when the   sample size in each intervention is small.

Algorithm \ref{alggm} assumes that there are no statistical  errors for testing conditional independencies. However, in a real scenario, two v-structures in two graphs learned by Algorithm \ref{alggm} may contain the same edge oriented in different ways because of statistical errors. This conflicting  problem due to statistical errors may also appear in the constrain-based algorithms even without interventions.
As treated in most algorithms, we can simply remove some v-structures inducing the conflicting constrains.
\citet{sofia2014constraint} propose an approach in which a function of their corresponding p-values is used to sort the constrains in order of confidence.
 
In Algorithm \ref{algdp}, we output the frequencies of  edges appearing in the graphs learned by re-sampling method to evaluate the skeleton of meta learning graph. Additionally, we also can evaluate  directed edges in v-structures learned in  Algorithm \ref{algdp} in a similar way.

Moreover, we use  R to implement algorithms and experiments in this paper and the R package  can be found at   \url{http://www.math.pku.edu.cn/teachers/heyb/}

\section*{Acknowledgements}
This research was supported by NSFC (11101008, 71271211, 11331011),
863 Program of China (2015AA020507) and
973 Program of China (2015CB856000).
We would like to thank the editor and two referees for their very valuable comments and suggestions.

\bibliography{ref1}

 \end{document}